%
%
%
%
%
%
%
%
          
%

\documentclass[11pt]{article}
\usepackage{graphicx}
%
%
%
%
%

\usepackage{amsmath,amsthm,amscd}
\usepackage{amssymb}
\usepackage{fancybox}
\usepackage{latexsym}
\usepackage{fancyhdr}
\usepackage{lastpage}
\usepackage{slashbox}
\usepackage{url}
\usepackage{lscape}
\usepackage{multirow}
\usepackage{mlapa}

\oddsidemargin = 0cm 
\evensidemargin = 0cm
\topmargin = -5mm
\textheight =22cm 
\textwidth =16cm

\setcounter{topnumber}{2}

\setcounter{bottomnumber}{1}

\setcounter{totalnumber}{10}

\setcounter{dbltopnumber}{3}

\sloppy

\newtheorem{theorem}{Theorem}

\newtheorem{remark}{Remark}
\newtheorem{example}{Example}

\usepackage{bm}
\newcommand{\alphab}{{\bm \alpha}}
\newcommand\cp{\stackrel{p}{\longrightarrow}} 
\newcommand{\balpha}{\mbox{\boldmath $\alpha$}}
\newcommand{\alphabold}{\mbox{\boldmath $\alpha$}}
\newcommand{\simiid}{{\sim_\iid}}
\newcommand{\betabold}{\mbox{\boldmath $\beta$}}
\newcommand{\phib}{\bm \phi}

\newcommand{\bphi}{\bm \phi}
\newcommand{\Rbb}{\mathbb{R}}
\newcommand{\Real}{\Rbb}
\newcommand{\Cov}{{\rm Cov}}
\newcommand{\iid}{{\it i.i.d.\,}}
\newcommand{\0}{\mbox{\bf 0}}
\newcommand{\1}{\mbox{\bf 1}}
\renewcommand{\b}{{\bm b}}
\renewcommand\u{{\bm u}}
\newcommand{\x}{{\bm x}}
\newcommand{\etab}{{\bm \eta}}
\newcommand{\btheta}{{\bm \theta}}
\newcommand{\boldeta}{{\bm \eta}}
\newcommand{\thetab}{{\bm \theta}}

\begin{document}
%
\title{Semi-Supervised learning with Density-Ratio Estimation}
\author{
  Masanori Kawakita\\ Kyushu University\\ \tt{kawakita@inf.kyushu-u.ac.jp}
  \and
  Takafumi Kanamori\\ Nagoya University \\ \tt{kanamori@is.nagoya-u.ac.jp}
 }
%
%
%
%
\date{}
\maketitle

\begin{abstract}
 In this paper, we study statistical properties of semi-supervised learning, which is
 considered as an important problem in the community of machine learning. 
 In the standard supervised learning, only the labeled data is observed. 
 The classification and regression problems are formalized as the supervised learning. 
 In semi-supervised learning, unlabeled data is also obtained in addition to labeled
 data. 
 Hence, exploiting unlabeled data is important to improve the prediction accuracy 
 in semi-supervised learning. 
 This problems is regarded as a semiparametric estimation problem with missing data. 
 Under the the discriminative probabilistic models, 
 it had been considered that the unlabeled data is useless to improve the
 estimation accuracy. 
 Recently, it was revealed that the weighted estimator 
 using the unlabeled data achieves better prediction accuracy in comparison to the 
 learning method using only labeled data, especially when the discriminative probabilistic
 model is misspecified. 
 That is, the improvement under the semiparametric model with missing data is possible,
 when the semiparametric model is misspecified. 
 In this paper, we apply the density-ratio estimator to obtain the weight function 
 in the semi-supervised learning. 
 The benefit of our approach is that the proposed estimator does not require
 well-specified probabilistic models for the probability of the unlabeled data. 
 Based on the statistical asymptotic theory, we prove that the estimation accuracy of our
 method outperforms the supervised learning using only labeled data. 
 Some numerical experiments present the usefulness of our methods. 
\end{abstract}

\section{Introduction}
\label{sec:Introduction}
In this paper, we analyze statistical properties of semi-supervised learning. 
In the standard supervised learning, only the labeled data $(x,y)$ is observed, and 
the goal is to estimate the relation between $x$ and $y$. 
In semi-supervised learning \cite{chapelle06:_semi_super_learn}, 
the unlabeled data $x'$ is also obtained in addition to labeled data. 
In real-world data such as the text data, we can often obtain both labeled and unlabeled
data. 
A typical example is that $x$ and $y$ stand for the text of an article, and the tag of the
article, respectively. Tagging the article demands a lot of effort. Hence, 
the labeled data is scarce, while the unlabeled data is abundant. 
In semi-supervised learning, studying methods of exploiting unlabeled data is an important
issue. 

In the standard semi-supervised learning, 
statistical models of the joint probability $p(x,y)$, i.e., generative models, are 
often used to incorporate the information involved in the unlabeled data into the
estimation. 
For example, under the statistical model $p(x,y;\betabold)$ having the parameter
$\betabold$, the information involved in the unlabeled data is used to estimate the
parameter $\betabold$ via the marginal probability
$p(x;\betabold)=\int{}p(x,y;\betabold)dy$. The amount of information in unlabeled samples
is studied by 
\cite{castelli96,dillon10:_asymp_analy_gener_semi_super_learn,sinha07:_value_label_unlab_examp_model_imper}. 
This approach is developed to deal with a various data structures. 
For example, semi-supervised learning with manifold assumption or cluster assumption
has been studied along this line 
\cite{belkin04:_semi_super_learn_rieman_manif,DBLP:conf/nips/LaffertyW07}. 
Under some assumptions on generative models, 
it is revealed that unlabeled data is useful to improve the prediction accuracy. 

Statistical models of the conditional probability $p(y|x)$, 
i.e., discriminative models, are also used in semi-supervised learning. 
It seems that the unlabeled data is not useful 
that much for the estimation of the conditional probability, since the marginal
probability does not have any information on $p(y|x)$ 
\cite{lasserre06:_princ_hybrid_gener_discr_model,seeger01:_learn,zhang00}. 
Indeed, the maximum likelihood estimator using a parametric model of $p(y|x)$ 
is not affected by the unlabeled data. 
Sokolovska, et al.~\cite{sokolovska08}, however, proved that 
even under discriminative models, 
unlabeled data is still useful to improve the prediction accuracy of the learning method
with only labeled data. 

Semi-supervised learning methods basically work well under some assumptions on 
the population distribution and the statistical models. 
However, it was also reported that the semi-supervised learning has a possibility 
to degrade the estimation accuracy, especially when a misspecified model is applied
\cite{cozman03:_semi,grandvalet05:_semi,nigam99:_text_class_label_unlab_docum_em}. 
Hence, a \emph{safe} semi-supervised learning is desired. 
The learning algorithms proposed by Sokolovska, et al.~\cite{sokolovska08} and Li and Zhou
\cite{li11:_towar_makin_unlab_data_never_hurt} 
have a theoretical guarantee such that 
the unlabeled data does not degrade the estimation accuracy. 

In this paper, we develop the study of \cite{sokolovska08}. To incorporate the information
involved in unlabeled data into the estimator, Sokolovska, et al.~\cite{sokolovska08}
used the weighted estimator. In the estimation of the weight function, a well-specified
model for the marginal probability $p(x)$ was assumed. 
This is a strong assumption for semi-supervised learning. 
To overcome the drawback, we apply the density-ratio estimator for the estimation of the
weight function 
\cite{sugiyama12:_machin_learn_non_station_envir,sugiyama12:_densit_ratio_estim_machin_learn}. 
We prove that the semi-supervised learning with the density-ratio estimation 
improves the standard supervised learning. 
Our method is available not only classification problems but also regression problems,
while many semi-supervised learning methods focus on binary classification problems. 

This paper is organized as follows. 
In Section \ref{sec:problem_setup}, we show the problem setup. 
In Section \ref{sec:Inference_using_Weighted_Estimator}, 
we introduce the weighted estimator investigated by Sokolovska,~et
al.,\cite{sokolovska08}. 
In Section \ref{sec:Density-ratio_estimation}, we briefly explain the density-ratio
estimation. 
In Section \ref{sec:Semi-Supervised_Learning_with_Density-Ratio_Estimation}, 
the asymptotic variance of the estimators under consideration is studied. 
Section \ref{sec:Maximum_Improvement_SSL} is devoted to prove that the weighted estimator
using labeled and unlabeled data outperforms the supervised learning using only labeled data. 
In Section \ref{sec:Numerical_Experiments}, numerical experiments are presented. 
We conclude in Section \ref{sec:Conclusion}.

\section{Problem Setup}
\label{sec:problem_setup}
We introduce the problem setup. 
We suppose that the probability distribution of training samples is given as 
\begin{align}
 (x_i,y_i)\simiid{}p(y|x){}p(x),\ \ i=1,\ldots.n,
 \qquad
 x_j'\simiid{}q(x),\ \ j=1,\ldots,n', 
 \label{eqn:problem-setup}
\end{align}
where $p(y|x)$ is the conditional probability of $y\in\mathcal{Y}$ given
$x\in\mathcal{X}$, and  
$p(x)$ and $q(x)$ are the marginal probabilities on $\mathcal{X}$. 
Here, $q(x)$ is regarded as the probability in the testing phase, i.e., the test data
$(x,y)$ is distributed from the joint probability $p(y|x)q(x)$, and the estimation
accuracy is evaluated under the test probability. The paired sample $(x_i,y_i)$ is called
``labeled data'', and the unpaired sample $x_j'$ is called ``unlabeled data''. 
Our goal is to estimate the conditional probability $p(y|x)$
or the conditional expectation $E[y|x]$ 
based on the labeled and unlabeled data in \eqref{eqn:problem-setup}. 
When $\mathcal{Y}$ is a finite set, the problem is called the classification problem. 
For $\mathcal{Y}=\Rbb$, the estimation of $E[y|x]$ is referred to as the regression
problem. 

We describe the assumption on the marginal distributions, $p(x)$ and $q(x)$ in 
\eqref{eqn:problem-setup}. 
In the context of the \emph{covariate shift} adaptation \cite{JSPI:Shimodaira:2000}, 
the assumption that $p(x)\neq{}q(x)$ is employed in general. 
The weighted estimator with the weight function $q(x)/p(x)$ is used to correct the
estimation bias induced by the covariate shift; see 
\cite{sugiyama12:_machin_learn_non_station_envir,sugiyama12:_densit_ratio_estim_machin_learn}
for details. 
Hence, the estimation of the weight function $q(x)/p(x)$ is important to achieve a good
estimation accuracy. 
On the other hand, in the \emph{semi-supervised learning} \cite{chapelle06:_semi_super_learn}, 
the equality $p(x)=q(x)$ is assumed, and often $n'$ is much larger than $n$. 
This setup is also quite practical. 
For example, in the text data mining, the labeled data is scarce, while the unlabeled data
is abundant. 
In this paper, we assume that the equality 
\begin{align}
 p(x)=q(x) 
 \label{eqn:p_eq_q}
\end{align}
holds. 

We define the following semiparametric model, 
\begin{align}
 \mathcal{M}=\left\{p(y|x;\balpha)r(x)\,:\,
 \balpha\in{A}\subset\Rbb^d,\ r\in\mathcal{P}\right\}, 
 \label{eqn:para-model-cond-density}
\end{align}
for the estimation of the conditional probability $p(y|x)$, where $\mathcal{P}$ is the set
of all probability densities of the covariate $x$. 
The parameter of interest is $\alphab$, and $r(x)\in\mathcal{P}$ is the nuisance
parameter. 
The model $\mathcal{M}$ does not necessarily include the true probability $p(y|x)q(x)$, 
i.e., there may not exist the parameter $\alphab$ such that $p(y|x)=p(y|x;\alphab)$ holds. 
This is the significant condition, 
when we consider the improvement of the inference with the labeled and unlabeled data. 
Our target is to estimate the parameter $\alphab^*$ satisfying 
\begin{align}
 \max_{\alphab\in{A}}E[\log{}p(y|x;\alphab)]=E[\log{}p(y|x;\alphab^*)], 
 \label{eqn:true_parameter}
\end{align}
in which $E[\cdot]$ denotes the expectation with respect to the population distribution. 
If the model $\mathcal{M}$ includes the true probability, we have
$p(y|x;\alphab^*)=p(y|x)$ due to the non-negativity of Kullback-Leibler divergence
\cite{cover06:_elemen_of_infor_theor_wiley}. In the misspecified setup, however, the
equality $p(y|x;\alphab^*)=p(y|x)$ is not guaranteed.

\section{Weighted Estimator in Semi-supervised Learning}
\label{sec:Inference_using_Weighted_Estimator}
We introduce the weighted estimator. 
For the estimation of $p(y|x)$ under the model \eqref{eqn:para-model-cond-density}, 
we consider the maximum likelihood estimator (MLE). 
For the statistical model $p(y|x;\alphab)$, let $\u(x,y;\balpha)\in\Rbb^d$ be the score function
\begin{align*}
 \u(x,y;\alphab)=\nabla{}\log{}p(y|x;\alphab), 
\end{align*}
where $\nabla$ denotes the gradient with respect to the model parameter. 
Then, for any $\alphab\in{A}$, we have
\begin{align*}
 \int\u(x,y;\balpha) p(y|x;\balpha) p(x) dxdy=\0. 
\end{align*}
In addition, the extremal condition of \eqref{eqn:true_parameter} leads to 
\begin{align*}
 \int\u(x,y;\balpha^*) p(y|x) p(x) dxdy=\0. 
\end{align*}
Hence, we can estimate the conditional density $p(y|x)$ by $p(y|x;\widehat{\balpha})$, 
where $\widehat{\balpha}$ is a solution of the estimation equation 
\begin{align}
 \frac{1}{n}\sum_{i=1}^{n}\u(x_i,y_i;\balpha)=\0. 
 \label{eqn:Z-estimator}
\end{align}
Under the regularity condition, the MLE has the statistical consistency
to the parameter $\alphab^*$ in \eqref{eqn:true_parameter}; 
see \cite{vaart98:_asymp_statis} for details. 
In addition, the score function $\u$ is an optimal choice among Z-estimators
\cite{vaart98:_asymp_statis}, 
when the true probability density is included in the model $\mathcal{M}$. This implies
that the efficient score of the semiparametric model $\mathcal{M}$ is the same as the
score function of the model $p(y|x;\alphab)$. 
This is because, in the semiparametric model $\mathcal{M}$, 
the tangent space of the parameter of interest is orthogonal to that of the nuisance
parameter. 
Here, the asymptotic variance matrix of the estimated parameter 
is employed to compare the estimation accuracy. 


Next, we consider the setup of the semi-supervised learning. 
When the model $\mathcal{M}$ is specified, we find that the estimator
\eqref{eqn:Z-estimator} using only the labeled data is efficient. 
This is obtained from the results of numerous studies about the semiparametric 
inference with missing data; see \cite{nan09:_asymp,robins94:_estim} and references
therein. 

Suppose that the model $\mathcal{M}$ is misspecified. 
Then, it is possible to improve the MLE in \eqref{eqn:Z-estimator} by 
using the weighted MLE \cite{sokolovska08}. 
The weighted MLE is defined as a solution of the equation, 
\begin{align}
 \frac{1}{n}\sum_{i=1}^{n}w(x_i)\u(x_i,y_i;\balpha)=\0, 
 \label{eqn:wZ-estimator}
\end{align}
where $w(x)$ is a weight function. Suppose that $w(x)=q(x)/p(x)$. 
Then the law of large numbers leads to the probabilistic convergence, 
\begin{align*}
 \frac{1}{n}\sum_{i=1}^{n}w(x_i)\u(x_i,y_i;\balpha)
 \cp
 \int \frac{q(x)}{p(x)} \u(x,y;\alphabold) p(y|x)p(x)dx
 =
 \int \u(x,y;\alphabold) p(y|x)q(x)dx. 
\end{align*}
Hence the estimator $p(y|x;\widehat{\alphabold})$ based on \eqref{eqn:wZ-estimator}
will provide a good estimator of $p(y|x)$ under the marginal probability $q(x)$. 
This indicates that $p(y|x;\widehat{\alphabold})$ is expected to approximate $p(y|x)$ over
the region on which $q(x)$ is large. 
The weight function $w(x)$ has a role to adjust the bias of the estimator under the
covariate shift \cite{JSPI:Shimodaira:2000}. 
On the setup of the semi-supervised learning, however, $w(x)=q(x)/p(x)=1$ holds, and it is
known beforehand. 
Hence, one may think that there is no need to estimate the weight function. 
Sokolovska, et al.,\cite{sokolovska08} showed that estimation of the weight function is 
useful, even though it is already known in the semi-supervised learning. 

We briefly introduce the result in \cite{sokolovska08}. 
Let the set $\mathcal{X}$ be finite. Then, $\mathcal{P}$ is a finite dimensional
parametric model. 
Suppose that the sample size of the unlabeled data is enormous, and that 
the probability function $q(x)$ on $\mathcal{X}$ is known with a high degree of accuracy. 
The probability $p(x)$ is estimated by the maximum likelihood estimator 
$\widehat{p}(x)$ based on the samples $\{x_i\}_{i=1}^{n}$ in the labeled data. 
Then, Sokolovska, et al.~\cite{sokolovska08} showed that the weighted MLE
\eqref{eqn:wZ-estimator} with the estimated weight function $w(x)=q(x)/\widehat{p}(x)$ 
improves the naive MLE, when the model $\mathcal{M}$ is misspecified, i.e.,
$p(y|x)q(x)\not\in\mathcal{M}$. 

Shimodaira \cite{JSPI:Shimodaira:2000} pointed out that 
the weighted MLE using the exact density ratio $w(x)=q(x)/p(x)$ has the statistical
consistency to the target parameter $\alphab^*$, when the covariate shift occurs. 
Under the regularity condition, it is rather straightforward to see that the weighted MLE
using the estimated weight function $w(x)=q(x)/\widehat{p}(x)$ also converges to
$\alphab^*$ in probability, since $\widehat{p}(x)$ converges to $p(x)$ in probability. 
Sokolovska's result implies that when $p(x)=q(x)$ holds, 
the weighted MLE using the estimated weight function improves the weighted MLE using 
the true density ratio in the sense of the asymptotic variance of the estimator. 

The phenomenon above is similar to the statistical paradox analyzed by
\cite{henmi04,henmi07:_impor_sampl_via_estim_sampl}. 
In the semi-parametric estimation, Henmi and Eguchi \cite{henmi04} pointed out that 
the estimation accuracy of the parameter of interest can be improved by estimating the
nuisance parameter, even when the nuisance parameter is known beforehand. 
Hirano, et al.,~\cite{hirano03:_effic_estim_averag_treat_effec} also pointed out that 
the estimator with the estimated propensity score is more efficient than the estimator
using the true propensity score in the estimation of the average treatment effects. 
Here, the propensity score corresponds to the weight function $w(x)$ in our context. 
The degree of improvement is described by using the projection of the score function 
onto the subspace defined by the efficient score for the semi-parametric model. 
In our analysis, also the projection of the score function $\u(x,y;\alphabold)$ plays 
an important role as shown in Section \ref{sec:Maximum_Improvement_SSL}. 

For the estimation of the weight function in \eqref{eqn:wZ-estimator}, 
we apply the density-ratio estimator 
\cite{sugiyama12:_machin_learn_non_station_envir,sugiyama12:_densit_ratio_estim_machin_learn}
instead of estimating the probability densities separately. 
We show that the density-ratio estimator provides a practical method for the
semi-supervised learning. 
In the next section, we introduce the density-ratio estimation.

\section{Density-ratio estimation}
\label{sec:Density-ratio_estimation}
Density-ratio estimators are available to estimate the weight function $w(x)=q(x)/p(x)$. 
Recently, methods of the direct estimation for density-ratios have been developed in 
the machine learning community  
\cite{sugiyama12:_machin_learn_non_station_envir,sugiyama12:_densit_ratio_estim_machin_learn}. 
We apply the density-ratio estimator to estimate the weight function $w(x)$ 
instead of using the estimator of each probability density. 

We briefly introduce the density-ratio estimator according to 
\cite{Biometrika:Qin:1998}. 
Suppose that the following training samples are observed, 
\begin{align}
 x_i\simiid{}p(x),\ \ i=1,\ldots.n,
 \qquad
 x_j'\simiid{}q(x),\ \ j=1,\ldots,n'. 
 \label{eqn:training_data_densityratio}
\end{align}
Our goal is to estimate the density-ratio $w(x)=q(x)/p(x)$. 
The $r$-dimensional parametric model for the density-ratio is defined by 
\begin{align}
 w(x;\thetab)=\exp\{\theta_1\phi_1(x)+\cdots+\theta_r\phi_r(x)\}, 
 \label{eqn:density-ratio-model}
\end{align}
where $\phi_1(x)=1$ is assumed. 
For any function $\etab(x;\thetab)\in\Rbb^r$ which may depend on the parameter $\thetab$, 
one has the equality
\begin{align*}
 \int\etab(x;\thetab)w(x)p(x)dx-\int\etab(x;\thetab)q(x)dx=\0
\end{align*}
Hence, the empirical approximation of the above equation is expected to provide an
estimation equation of the density-ratio. 
The empirical approximation of the above equality 
under the parametric model of $w(x;\thetab)$ is given as
\begin{align}
 \frac{1}{n}\sum_{i=1}^{n}\boldeta(x_i;\btheta)w(x_i;\btheta)
-\frac{1}{n'}\sum_{j=1}^{n'}\boldeta(x_j';\btheta)=\0. 
 \label{eqn:density-ratio-estimator}
\end{align}
Let $\widehat{\thetab}$ be a solution of \eqref{eqn:density-ratio-estimator}, and then, 
$w(x;\widehat{\thetab})$ is an estimator of $w(x)$. 
Note that we do not need to estimate probability densities $p(x)$ and $q(x)$ separately. 
The estimation equation \eqref{eqn:density-ratio-estimator} provides a direct estimator of
the density-ratio based on the moment matching with the function $\etab(x;\thetab)$. 

Qin~\cite{Biometrika:Qin:1998} proved that the optimal choice of $\etab(x;\thetab)$ is
given as 
\begin{align}
 \etab(x;\thetab) 
 = \frac{1}{1+n'/n\cdot{}w(x;\thetab)}\nabla\log{}w(x;\thetab)
 = \frac{1}{1+n'/n\cdot{}w(x;\thetab)}\phib(x), 
 \label{eqn:optimal-density-ratio-est}
\end{align}
where $\phib(x)=(\phi_1(x),\ldots,\phi_r(x))^T$. 
By using $\etab(x;\thetab)$ above, the asymptotic variance matrix 
of $\widehat{\thetab}$ is minimized among the set of moment matching estimators, 
when $w(x)$ is realized by the model $w(x;\thetab)$. Hence, 
\eqref{eqn:optimal-density-ratio-est} is regarded as the counterpart of the score function
for parametric probability models.

\section{Semi-Supervised Learning with Density-Ratio Estimation}
\label{sec:Semi-Supervised_Learning_with_Density-Ratio_Estimation}

We study the asymptotics of the weighted MLE \eqref{eqn:wZ-estimator} using the
estimated density-ratio. The estimation equation is given as 
\begin{align}
 \left\{
 \begin{array}{ll}
  \displaystyle
  \frac{1}{n}\sum_{i=1}^{n}w(x_i;\btheta)\u(x_i,y_i;\balpha)=\0, \\
  \displaystyle
  \frac{1}{n}\sum_{i=1}^{n}\boldeta(x_i;\btheta)w(x_i;\btheta)
-\frac{1}{n'}\sum_{j=1}^{n'}\boldeta(x_j';\btheta)=\0. 
 \end{array}
\right.
 \label{eqn:Dress3}
\end{align}
Here, the statistical models \eqref{eqn:para-model-cond-density} and 
\eqref{eqn:density-ratio-model} are employed. 
The first equation is used for the estimation of the parameter $\alphab$ of the model
$p(y|x;\alphabold)$, and the second equation is used for the estimation of the
density-ratio $w(x;\thetab)$. 
The estimator defined by \eqref{eqn:Dress3} is refereed to as density-ratio estimation
based on semi supervised learning, or \emph{DRESS} for short. 

In Sokolovska, et al.\cite{sokolovska08}, 
the marginal probability density $p(x)$ is estimated
by using a well-specified parametric model. 
Clearly, preparing the well-specified parametric model is not practical, 
when $\mathcal{X}$ is not finite set. 
On the other hand, it is easy to prepare a specified model of the density-ratio $w(x)$, 
whenever $p(x)=q(x)$ holds in \eqref{eqn:problem-setup}. 
The model \eqref{eqn:density-ratio-model} is an example. Indeed, $w(x;\0)=1$ holds. 
Hence, the assumption that the true weight function is realized by 
the model $w(x;\thetab)$ is not of an obstacle in semi-supervised learning. 

We show the asymptotic expansion of the estimation equation \eqref{eqn:Dress3}. 
Let $\widehat{\balpha}$ and $\widehat{\btheta}$ be a solution of \eqref{eqn:Dress3}. 
In addition, define $\balpha^*$ be a solution of 
\begin{align*}
 \int\u(x,y;\balpha)p(y|x)p(x) dxdy=\0
\end{align*}
and $\thetab^*$ be the parameter such that $w(x;\thetab^*)=1$, i.e., $\thetab^*=\0$. 
We prepare some notations: 
$\u=\u(x,y;\alphab^*),\,\etab=\etab(x;\thetab^*),\,\u_i=\u(x_i,y_i;\alphab^*),\,\etab_i=\etab(x_i;\thetab^*),\,\etab_j'=\etab(x_j';\thetab^*)$,  
$\delta\alphab=\widehat{\alphab}-\alphab^*,\,\delta\thetab=\widehat{\thetab}-\thetab^*$. 
The Jacobian of the score function $\u$ with respect to the parameter $\alphab$ is denoted
as $\nabla\u$, i.e., the $d$ by $d$ matrix whose element is given as 
$(\nabla{\u}(x,y;\alphab))_{ik}=
\frac{\partial^2}{\partial\alpha_i\partial\alpha_k}\log{}p(y|x;\alphab)$. 
The variance matrix and the covariance matrix under the probability $p(y|x)p(x)$ 
are denoted as $V[\cdot]$ and $\Cov[\cdot,\cdot]$, respectively. 
Without loss of generality, we assume that $\etab$ at $\thetab=\thetab^*$ is represented
as 
\begin{align*}
 \etab(x;\thetab^*)=\phib(x)+\widetilde{\phib}(x), 
\end{align*}
where $\widetilde{\phib}(x)$ is an arbitrary function orthogonal to $\phib(x)$, 
i.e., $E[\phib\widetilde{\phib}^T]=O$ holds. If $\etab(x;\thetab^*)$ does not have 
any component which is represented as a linear transformation of $\phib(x)$, the estimator 
would be degenerated. 
Under the regularity condition, the estimated parameters, 
$\widehat{\alphab}$ and $\widehat{\thetab}$, 
converge to $\alphab^*$ and $\thetab^*$, respectively. 
The asymptotic expansion of \eqref{eqn:Dress3} around 
$(\alphab,\thetab)=(\alphab^*,\thetab^*)$ leads to 
\begin{align*}
 E[\nabla\u]\delta\alphab+E[\u\phib^T]\delta\thetab
 &=-\frac{1}{n}\sum_{i=1}^{n}\u_i+o_p(n^{-1/2}),\\
 E[\phib\phib^T]\delta\thetab
 &=
 \frac{1}{n'}\sum_{j=1}^{n'}\etab_j' -\frac{1}{n}\sum_{i=1}^{n}\etab_i+o_p(n^{-1/2}).
\end{align*}
Hence, we have 
\begin{align*}
 E[\nabla\u]\delta\alphab
 =
 \frac{1}{n}\sum_{i=1}^{n}
 \big\{E[\u\phib^T]E[\phib\phib^T]^{-1}\etab_i-\u_i \big\}
 -\frac{1}{n'}\sum_{j=1}^{n'} E[\u\phib^T]E[\phib\phib^T]^{-1}\etab_j'
 +o_p(n^{-1/2}). 
\end{align*}
Therefore, we obtain the asymptotic variance, 
\begin{align*}
 &\phantom{=}
 n\cdot{}E[\nabla\u]V[\delta\alphab]E[\nabla\u]^T\\
 &=
 V[\u]+
 \bigg(1+\frac{n}{n'}\bigg)
 E[\u\phib^T]E[\phib\phib^T]^{-1}V[\etab]E[\phib\phib^T]^{-1}E[\phib\u^T]\\
 &\phantom{=}
 - E[\u\phib^T]E[\phib\phib^T]^{-1}\Cov[\etab,\u]
 - \Cov[\u,\etab]E[\phib\phib^T]^{-1}E[\phib\u^T]+o(1)
\end{align*}
On the other hand, the variance of the naive MLE, $\widetilde{\alphab}$, defined as
a solution of \eqref{eqn:Z-estimator} is given as
\begin{align*}
 n\cdot{}E[\nabla\u]V[\delta\widetilde{\alphab}]E[\nabla\u]^T=V[\u]+o(1), 
\end{align*}
where $\delta\widetilde{\alphab}=\widetilde{\alphab}-\alphab^*$.

\section{Maximum Improvement by Semi-Supervised Learning}
\label{sec:Maximum_Improvement_SSL}
Given 
the model for the density-ratio
$w(x;\thetab)$, we compare the asymptotic variance matrices of the estimators, 
$\widetilde{\alphab}$ and $\widehat{\alphab}$. 
First, let us define 
\begin{align*}
 \bar{\u}(x)
 =\int\!\u(x,y;\alphab^*)p(y|x)dy, 
\end{align*}
i.e., $\bar{\u}(x)$ is the projection of the score function $\u(x,y;\alphab^*)$ onto the
subspace consisting of all functions depending only on $x$, where 
the inner product is defined by the expectation under the joint probability $p(y|x)p(x)$. 
Note that the equality $E[\bar{\u}]=\0$ holds. 
Let the matrix $B$ be 
\begin{align*}
B=E[\bar{\u}\phib^T]E[\phib\phib^T]^{-1}. 
\end{align*}
Then, a simple calculation yields that 
the difference of the variance matrix between $\widetilde{\alphabold}$ and
$\widehat{\alphabold}$ is equal to 
\begin{align}
 \mathrm{Diff}[\u]
 &
 :=n\cdot{}E[\nabla\u]V[\delta\widetilde{\alphab}]E[\nabla\u]^T
 -n\cdot{}E[\nabla\u]V[\delta{\alphab}]E[\nabla\u]^T \nonumber\\
 &=
 \frac{n'}{n+n'}E[\bar{\u}\bar{\u}^T]-\bigg(1+\frac{n}{n'}\bigg)V[B\etab-\frac{n'}{n+n'}\bar{\u}]
 +o(1). 
 \label{eqn:diff-bar_u-expression}
\end{align}
In the second equality, we supposed that $n'/n$ converges to a positive constant. 
When $\mathrm{Diff}[\u]$ is positive definite, the estimator $\widehat{\alphab}$ 
using the labeled and unlabeled data improves the estimator $\widetilde{\alphab}$ using
only the labeled data. 
It is straightforward to see that the improvement is not attained 
if $\bar{\u}=\0$ holds. 
In general, the score function $\u(x,y;\alphab)=\nabla\log{p}(y|x;\alphab)$ 
satisfies $\bar{\u}=\0$, if the model is specified. 
When the model of the conditional probability $p(y|x)$ is misspecified, 
however, there is a possibility that the proposed estimator \eqref{eqn:Dress3} outperforms
the MLE $\widetilde{\alphab}$. 

We derive the optimal moment function $\etab$ for the estimation of the parameter $\alphab^*$. 
The optimal $\etab$ can be different from
\eqref{eqn:optimal-density-ratio-est}. 
We prepare some notations. 
Let $\Pi_{\bm \phi}\bar{\u}$ be the $\Real^d$-valued function on $\mathcal{X}$, 
each element of which is the projection of each element of $\bar{\u}$ onto the subspace
spanned by $\{\phi_1(x),\ldots,\phi_r(x)\}$. 
Here, the inner product is defined by the expectation under the marginal probability $p(x)$. 
In addition, let $\Pi_{\bm\phi}^\bot\bar{\u}$ be the projection of $\bar{\u}$ onto the
orthogonal 
complement of the subspace, 
i.e., $\Pi_{\bm \phi}^\bot\bar{\u}=\bar{\u}-\Pi_{\bm \phi}\bar{\u}$. 

\begin{theorem}
 \label{theorem:optimal-moment-func}
 We assume that the model of the density-ratio is defined as 
 \begin{align*}
  w(x;\thetab)=\exp\{\bphi(x)^T\thetab\}
 \end{align*}
 with the basis functions $\phib(x)=(\phi_1(x),\ldots,\phi_r(x))$
 satisfying $\phi_1(x)=1$. Suppose that $E[\phib\phib^T]\in\Rbb^{r\times{r}}$ is
 invertible, and that the rank of $E[\bar{\u}\phib^T]E[\phib\phib^T]^{-1}$ is equal to the
 dimension of the parameter $\alphab$, i.e., row full rank. 
 We assume that the moment function $\etab(x;\thetab)$ at 
 $\thetab=\thetab^*$ is represented as 
\begin{align}
 \etab(x;\thetab^*)=\phib(x)+\widetilde{\phib}(x)
 \label{eqn:eta-representation}
\end{align}
 where 
 $\widetilde{\phib}(x)$ is a function orthogonal to
 $\phib(x)$, i.e., $E[\phib(x)\widetilde{\phib}(x)^T]=O$ holds. 
 Then, an optimal $\widetilde{\phib}$ is given as 
 \begin{align}
  \widetilde{\phib}=\frac{n'}{n+n'}B^T(BB^T)^{-1}\Pi_{\phib}^{\bot}\bar{\u}. 
  \label{eqn:optimal-eta}
 \end{align}
 For the optimal choice of $\etab$, the maximum improvement is given as 
 \begin{align}
  \mathrm{Diff}[\u]
  &=
  \frac{n'}{n+n'}E[\bar{\u}\bar{\u}^T]
  -\frac{n^2}{n'(n+n')}E[\Pi_{\phib}\bar{\u}(\Pi_{\phib}\bar{\u})^T]+o(1)\nonumber\\
  &=
   \frac{n'}{n+n'}E[{\Pi_{\phib}^\bot}\bar{\u}({\Pi_{\phib}^\bot}\bar{\u})^{T}]
  +\frac{n'-n}{n'}E[{\Pi_{\phib}}\bar{\u}({\Pi_{\phib}}\bar{\u})^{T}]+o(1)
  \label{eqn:maximum-improvement-opt}
 \end{align}
\end{theorem}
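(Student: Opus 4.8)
The plan is to argue directly from the difference formula \eqref{eqn:diff-bar_u-expression}. Since its leading term $\frac{n'}{n+n'}E[\bar{\u}\bar{\u}^T]$ does not involve the moment function, maximizing $\mathrm{Diff}[\u]$ in the positive semidefinite order is the same as minimizing $V\!\left[B\etab-\frac{n'}{n+n'}\bar{\u}\right]$ over all admissible $\widetilde{\phib}$, where $\etab=\phib+\widetilde{\phib}$ by \eqref{eqn:eta-representation}. So the whole proof reduces to a least-variance problem for an $\Rbb^d$-valued random vector, solved in the strongest (matrix) sense.

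First I would rewrite the argument of the variance using the $L^2(p)$-orthogonal decomposition $\bar{\u}=\Pi_{\phib}\bar{\u}+\Pi_{\phib}^{\bot}\bar{\u}$. The key algebraic identity is $B\phib=\Pi_{\phib}\bar{\u}$, which is immediate from $B=E[\bar{\u}\phib^T]E[\phib\phib^T]^{-1}$ and the formula for the $L^2(p)$-projection onto $\mathrm{span}\{\phi_1,\dots,\phi_r\}$. Hence $B\etab=\Pi_{\phib}\bar{\u}+B\widetilde{\phib}$ and
\begin{align*}
 B\etab-\frac{n'}{n+n'}\bar{\u}
 =\frac{n}{n+n'}\Pi_{\phib}\bar{\u}
 +\Bigl(B\widetilde{\phib}-\frac{n'}{n+n'}\Pi_{\phib}^{\bot}\bar{\u}\Bigr).
\end{align*}
The first summand lies componentwise in $\mathrm{span}\{\phi_i\}$, while the second lies in its orthogonal complement, because each component of $B\widetilde{\phib}$ is a fixed linear combination of components of $\widetilde{\phib}$, each orthogonal to $\phib$ by \eqref{eqn:eta-representation}. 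Since $\phi_1\equiv1$, both pieces are centred, so the cross-covariance vanishes and
\begin{align*}
 V\!\Bigl[B\etab-\tfrac{n'}{n+n'}\bar{\u}\Bigr]
 =\frac{n^2}{(n+n')^2}E[\Pi_{\phib}\bar{\u}(\Pi_{\phib}\bar{\u})^{T}]
 +V\!\Bigl[B\widetilde{\phib}-\tfrac{n'}{n+n'}\Pi_{\phib}^{\bot}\bar{\u}\Bigr].
\end{align*}

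Next I would note that the first term on the right is fixed, whereas the second can be driven to $O$ — the minimum in the PSD order, hence optimal — by choosing $\widetilde{\phib}$ so that $B\widetilde{\phib}=\frac{n'}{n+n'}\Pi_{\phib}^{\bot}\bar{\u}$. The row-full-rank hypothesis on $B$ makes $BB^{T}$ invertible, so $\widetilde{\phib}=\frac{n'}{n+n'}B^{T}(BB^{T})^{-1}\Pi_{\phib}^{\bot}\bar{\u}$ does the job; it is admissible because each of its components is a linear combination of components of $\Pi_{\phib}^{\bot}\bar{\u}$, hence orthogonal to $\phib$. This is exactly \eqref{eqn:optimal-eta}. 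Substituting back and using $\bigl(1+\frac{n}{n'}\bigr)\frac{n^2}{(n+n')^2}=\frac{n^2}{n'(n+n')}$ gives the first line of \eqref{eqn:maximum-improvement-opt}; the second line then follows from $E[\bar{\u}\bar{\u}^T]=E[\Pi_{\phib}\bar{\u}(\Pi_{\phib}\bar{\u})^{T}]+E[\Pi_{\phib}^{\bot}\bar{\u}(\Pi_{\phib}^{\bot}\bar{\u})^{T}]$ together with $\frac{n'}{n+n'}-\frac{n^2}{n'(n+n')}=\frac{n'-n}{n'}$.

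The variance bookkeeping and the collection of the $n/n'$ factors are routine. The one step that deserves care — and which I expect to be the main obstacle — is justifying the orthogonal split of the covariance matrix: one must check that $B\widetilde{\phib}$ carries no component in $\mathrm{span}\{\phi_i\}$ (so the cross term genuinely vanishes), and that the proposed $\widetilde{\phib}$ is truly admissible in the sense of \eqref{eqn:eta-representation} rather than merely algebraically convenient. Once those orthogonality facts are in hand, optimality is automatic since a covariance matrix cannot be made smaller than $O$.
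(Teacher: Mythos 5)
Your proposal is correct and follows essentially the same route as the paper's own proof: reduce to minimizing $V[B\etab-\frac{n'}{n+n'}\bar{\u}]$, split it orthogonally into a fixed term $V[\frac{n}{n+n'}\Pi_{\phib}\bar{\u}]$ plus $V[B\widetilde{\phib}-\frac{n'}{n+n'}\Pi_{\phib}^{\bot}\bar{\u}]$, and annihilate the latter via the row-full-rank pseudoinverse of $B$. The only difference is that you spell out the admissibility of the optimal $\widetilde{\phib}$ and the vanishing of the cross-covariance, which the paper leaves implicit.
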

\begin{proof}
 Due to $\phi_1(x)=1$, one has $E[\widetilde{\phib}]=\0$ and
 $E[\Pi_{\phib}^\bot{\bar{\u}}]=E[1\cdot\Pi_{\phib}^\bot{\bar{\u}}]=\0$. 
 Hence, one has $E[\Pi_{\phib}{\bar{\u}}]=E[\bar{\u}]-E[\Pi_{\phib}^\bot{\bar{\u}}]=\0$. 
 Our goad is to find $\widetilde{\phib}$ which minimizes 
 $V[B\etab-\frac{n'}{n+n'}\bar{\u}]$ in \eqref{eqn:diff-bar_u-expression} 
 in the sense of positive definiteness. 
 The orthogonal decomposition leads to 
 \begin{align*}
  V[B\etab-\frac{n'}{n+n'}\bar{\u}]
  =
  V[B\phib-\frac{n'}{n+n'}\Pi_{\phib}\bar{\u}]+V[B\widetilde{\phib}-\frac{n'}{n+n'}\Pi_{\phib}^{\bot}\bar{\u}], 
 \end{align*}
 because of the orthogonality between $B\phib-\frac{n'}{n+n'}\Pi_{\phib}\bar{\u}$ and
 $B\widetilde{\phib}-\frac{n'}{n+n'}\Pi_{\phib}^{\bot}\bar{\u}$, 
 and the equality $E[B\widetilde{\phib}-\frac{n'}{n+n'}\Pi_{\phib}^\bot{\bar{\u}}]=\0$. 
 Hence, $\widetilde{\phib}$ satisfying 
 \begin{align*}
  B\widetilde{\phib}=\frac{n'}{n+n'}\Pi_{\phib}^{\bot}\bar{\u} 
 \end{align*}
 is an optimal choice. 
 Since the matrix $B$ is row full rank, a solution of the above equation is given by 
 \begin{align*}
  \widetilde{\phib}=\frac{n'}{n+n'}B^T(BB^T)^{-1}\Pi_{\phib}^{\bot}\bar{\u}. 
 \end{align*}
 We obtain the maximum improvement of $\mathrm{Diff}[\u]$ 
 by using the equalities 
 $V[\Pi_{\phib}\bar{\u}]=E[\Pi_{\phib}\bar{\u}(\Pi_{\phib}\bar{\u})^T]$
 and 
 $B\phib=E[\bar{\u}\phib^T]E[\phib\phib^T]^{-1}\phib=\Pi_{\phib}\bar{\u}$. 
 \end{proof}

Suppose that the optimal moment function $\etab=\phib+\widetilde{\phib}$ presented in 
Theorem \ref{theorem:optimal-moment-func} is used with the score function
$\u(x,y;\alphab)$. Then, the improvement \eqref{eqn:maximum-improvement-opt} is 
maximized when $E[\Pi_{\phib}\bar{\u}(\Pi_{\phib}\bar{\u})^T]$ is minimized. 
Hence, the model $w(x;\thetab)$ with the lower dimensional parameter $\thetab$ 
is preferable as long as the
assumption in Theorem \ref{theorem:optimal-moment-func} is satisfied. 
This is intuitively understandable, because the statistical perturbation 
of the density-ratio estimator is minimized, when the smallest model is employed. 

\begin{remark}
 Suppose that the basis functions, $\phi_1(x),\ldots,\phi_r(x)$, 
 are closely orthogonal to $\bar{\u}$, i.e., 
 $E[\bar{\u}\phib^T]$ is close to the null matrix. 
 Then, the improvement $\mathrm{Diff}[\u]$ is close to $\frac{n'}{n+n'}E[\bar{\u}\bar{\u}^T]$. 
 As a result, we have 
 $\sup_{\phib}\mathrm{Diff}[\u]=\frac{n'}{n+n'}E[\bar{\u}\bar{\u}^T]$ in which the
 supremum 
 is taken over the basis of the density-ratio model 
 satisfying the assumption in Theorem \ref{theorem:optimal-moment-func}. 
 However, the basis functions satisfying the exact equality $E[\bar{\u}\phib^T]=O$ is useless. 
 Because, the equality $E[\bar{\u}\phib^T]=O$ leads to $B=O$ and 
 thus, the equality \eqref{eqn:diff-bar_u-expression} is reduced to 
 \begin{align*}
  \mathrm{Diff}[\u]=\frac{n'}{n+n'}E[\bar{\u}\bar{\u}^T]-\frac{n+n'}{n'}V[\frac{n'}{n+n'}\bar{\u}]+o(1)=o(1). 
 \end{align*}
 This result implies that there is the singularity at the basis function $\phib$ such that
 $E[\bar{\u}\phib^T]=O$. 
\end{remark}

It is not practical to apply the optimal function $\etab(x;\thetab)$ defined by
\eqref{eqn:optimal-eta}. 
The optimal moment function depends on $\bar{\u}$, and one needs information 
on the probability $p(y|x)$ to obtain the explicit form of $\bar{\u}$. 
The estimation of $\bar{\u}$ needs non-parametric estimation, 
since the model misspecification of $\mathcal{M}$ is significant in our setup. 
Thus, we consider more practical estimator for the density ratio. 
Suppose that $\widetilde{\phib}=\0$ holds for the moment function $\etab(x;\thetab^*)$. 
For example, the optimal moment function \eqref{eqn:optimal-density-ratio-est} 
satisfies $\etab(x;\thetab^*)=\frac{n}{n+n'}\phib(x)$ at $\thetab=\thetab^*$, i.e., 
$\widetilde{\phib}=\0$. 
For the density-ratio model $w(x;\thetab)=\exp\{\phib(x)^T\thetab\}$ with 
$\phi_1(x)=1$ and the moment function satisfying $\etab(x;\thetab^*)=\phib(x)$, 
a brief calculation yields that 
\begin{align}
 \mathrm{Diff}[\u]=
 \frac{n'-n}{n'}E[{\Pi_{\bm\phi}}\bar{\u}({\Pi_{\bm\phi}}\bar{\u})^{T}]+o(1). 
 \label{eqn:maximum-improvement-tildephi0} 
\end{align}
Hence, the improvement is attained, when $n<n'$ holds. 
As an interesting fact, we see that the larger model $w(x;\thetab)$ attains the better
improvement in \eqref{eqn:maximum-improvement-tildephi0}. 
Indeed, $\Pi_{\bm\phi}\bar{\u}$ gets close to $\bar{\u}$, 
when the density-ratio model $w(x;\thetab)=\exp\{\thetab^T\phib(x)\}$ becomes large. 
Hence, the non-parametric estimation of the density-ratio may be a good choice to achieve
a large improvement for the estimation of the conditional probability. 
This is totally different from the case that the optimal $\widetilde{\phib}$ 
presented in Theorem \ref{theorem:optimal-moment-func} is used in 
the density-ratio estimation. 
The relation between 
$\mathrm{Diff}[\u]$ using the optimal $\widetilde{\phib}$ and $\mathrm{Diff}[\u]$ with
$\widetilde{\phib}=\0$ is illustrated in Figure \ref{fig:improvement}. 
In the limit of the dimension of $\thetab$, both variance matrices converge to 
$\frac{n'-n}{n'}E[\bar{\u}\bar{\u}^T]$ monotonically. 

\begin{figure}
 \centering 
 \includegraphics[scale=0.5]{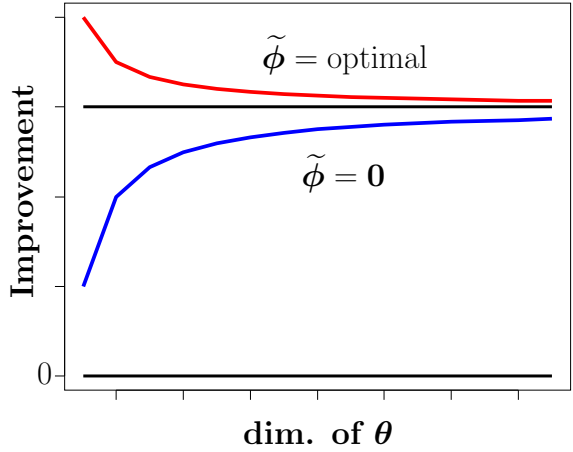}
 \caption{
 The improvement $\mathrm{Diff}[\u]$ is depicted as the function of the dimension
 of the density-ratio model. 
 Since the improvement is represented by the matrix, the figure gives 
 a view showing a frame format of the inequality relation. 
 When the dimension of $\thetab$ tends to infinity and $n'>n$ holds, 
 the two curves converges to the common positive definite matrix 
 $\frac{n'-n}{n'}E[\bar{\u}\bar{\u}^T]$. }
 \label{fig:improvement}
\end{figure}


\begin{example}
 Let $\u(x,y;\alphab)$ be the score function
 of the model $y=\alphab^T\b(x)+Z,\,Z\sim{}N(0,\sigma^2)$, 
 where $\b(x)=(b_1(x),\ldots,b_d(x))$ is the vector consisting of basis functions
 and $\sigma^2$ is a known parameter. 
 Then, one has $\u(x,y;\alphab)=(y-\alphab^T\b(x))\b(x)$. 
 Suppose that the true conditional probability leads to 
 the regression function $y=f(x)+Z$, where $E[Z|x]=0$ for all $x$. 
 Then, one has 
 $\bar{\u}(x;\alphab)=(f(x)-\alphab^T\b(x))\b(x)$
 and $E[\bar{\u}\bar{\u}^T]=E[(f(x)-\alphab^T\b(x))^2\b(x)\b(x)^T]$. 
 Hence, the upper bound of the improvement is governed by the 
 degree of the model misspecification $(f(x)-\alphab^T\b(x))^2$. 
 According to Theorem \ref{theorem:optimal-moment-func}, 
 an optimal moment function $\etab(x;\thetab)$ is given as 
 \begin{align*}
  \etab(x;\thetab^*)
  =
  \phib(x)+\frac{n'}{n+n'}B^T(BB^T)^{-1}
  \big((f(x)-{{\alphab}^*}^T\b(x))\b(x)-B\phib(x)\big)
 \end{align*}
 at $\thetab=\thetab^*$, where $B=E[(f-{\alphab^*}^T\b)\b\phib^T]E[\phib\phib^T]^{-1}$. 
\end{example}

\section{Numerical Experiments}
\label{sec:Numerical_Experiments}
We show numerical experiments to compare the standard supervised learning and 
the semi-supervised learning using DRESS. 
Both regression problems and classification problems are presented.

\subsection{Regression problems}
\label{subsec:Regression}
We consider the regression problem with the
$d$-dimensional covariate variable shown below. 
\begin{description}
 \item[labeled data:] 
	    \begin{align}
	     y_i &= \1^T\x_i+\varepsilon\frac{\|\x_i\|^2}{d}+z_i,
	     \  z_i\sim{}N(0,\sigma^2),\ i=1,\ldots,n,  \label{eqn:sim_reression_function}\\
	       x_i&\sim{}N_{d}(\0,I_{d}),\quad \1^T=(1,\ldots,1)\in\Rbb^{d}. \nonumber
	    \end{align}
 \item[unlabeled data:]
	    $\x_j'\sim{}N_{d}(\0,I_{d}),\,j=1,\ldots,n'$. 
 \item[regression model:]
	    $y=\alphab^T\x+z,\quad \alphab\in\Rbb^d,\quad z\sim{}N(0,s^2)$. 
 \item[score function:]
	    $\u(x,y;\alphab)=(y-\alphab^T\x)\x$. 
\end{description}

The parameter $\varepsilon$ in \eqref{eqn:sim_reression_function}
implies the degree of the model misspecification. 
Let $f_\varepsilon$ be the target function, 
$f_\varepsilon(x)=\1^T\x+\varepsilon\|\x\|^2/d$, and 
define 
\begin{align*}
 e(\varepsilon)=\min_{\bm \alpha} E_{\x}[|f_\varepsilon(\x)-\alphab^T\x|^2], 
\end{align*}
which implies the squared distance from the true function $f_\varepsilon$ to the 
linear regression model. 
On the other hand,
the mean square error of the naive least mean square (LMS) estimator
$\widetilde{\alphab}$, i.e., 
$E_{\mathrm{Data}}[E_{\x}[|f_0(\x)-\widetilde{\alphab}^T\x|^2]]$, 
is asymptotically equal to $\sigma^2d/n$, when the model is specified. 
We use the ratio
\begin{align*}
 \delta=\sqrt{e(\varepsilon)}\big/\sqrt{\frac{\sigma^2 d}{n}}
 =\sqrt{\frac{e(\varepsilon)n}{\sigma^2d}}
\end{align*}
as the normalized measure of the model misspecification. 
When $\delta\gg 1$ holds, the misspecification of the model can be
statistically detected. 

First, we use a parametric model for density ratio estimation. 
For any positive integer $k$, let $\x^{(k)}$ be the $d$-dimensional vector
$(x_1^k,\ldots,x_d^k)^T$. The density-ratio model is defined as 
\begin{align*}
 w(\x;\thetab)=\exp\left\{ \theta_0+\thetab_1^T\x+\thetab_2^T\x^{(2)}+\cdots+\thetab_L^T\x^{(L)} \right\}
\end{align*}
having $Ld+1$ dimensional parameter $(\theta_0,\thetab_1,\ldots,\thetab_L)$. 
We apply the estimator \eqref{eqn:optimal-density-ratio-est} presented by Qin
\cite{Biometrika:Qin:1998}. 
Note that the estimator 
\eqref{eqn:optimal-density-ratio-est} satisfies $\widetilde{\phib}=\0$ at
$\thetab=\thetab^*$. Hence, the improvement is asymptotically given by 
\eqref{eqn:maximum-improvement-tildephi0}. 
Under the setup of $d=2, n=500, n'=5000$ and $\sigma=0.2$, we compute 
the mean square errors for LMS estimator 
$\widetilde{\alphabold}$ and DRESS $\widehat{\alphabold}$. 
The difference of test errors, 
\begin{align*}
 n\cdot
 (E[(\widetilde{\alphabold}^T\x-f_\varepsilon(\x))^2]-E[(\widehat{\alphabold}^T\x-f_\varepsilon(\x))^2]), 
\end{align*}
is evaluated for each $\varepsilon$ and each dimension of the density ratio, $Ld+1$, 
where the expectation is evaluated over the test samples. 
The mean square error is calculated by the average over 500 iterations. 

Figure \ref{fig:sim_result_parametric_DRmodel} shows the results. 
When the model is specified, i.e., $\delta=0\,(\varepsilon=0)$, 
LMS estimator presents better performance than DRESS. 
Under the practical setup such as $\delta>1$, 
however, we see that DRESS outperforms LMS estimator. 
The dependency on the dimension of the density-ratio model is not clearly detected in this
experiment. 
Overall, larger density-ratio model presents rather unstable result. 
Indeed, in DRESS with large density ratio model, say the right bottom panel 
in Figure \ref{fig:sim_result_parametric_DRmodel}, 
the mean square error of DRESS can be large, i.e., the improvement is negative, 
even when the model misspecification $\delta$ is large. 

\begin{figure}[tb]
 \begin{center}
  \hspace*{-7mm}
  \begin{tabular}{ccc}
   \includegraphics[scale=0.3]{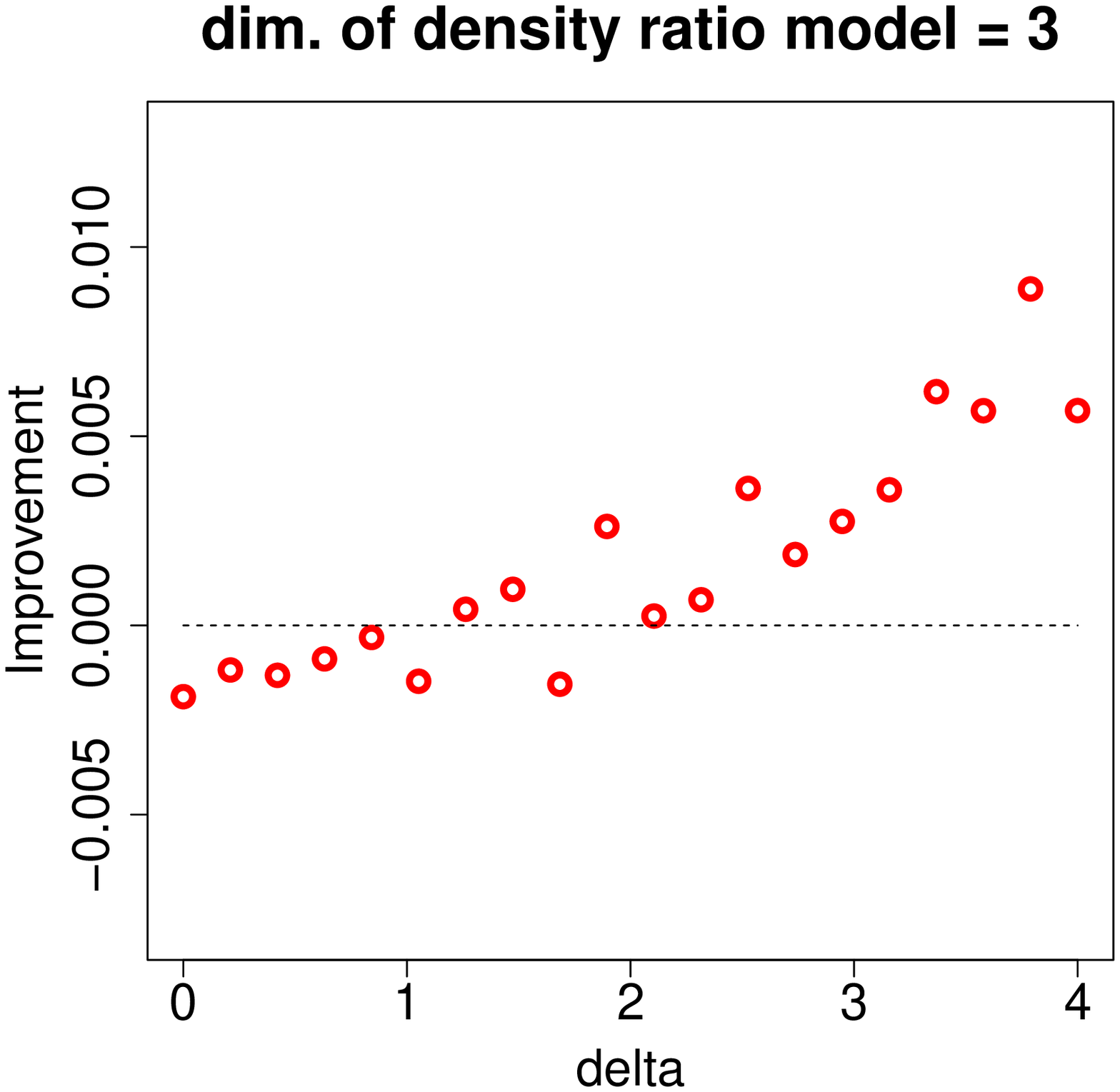}  &
   \includegraphics[scale=0.3]{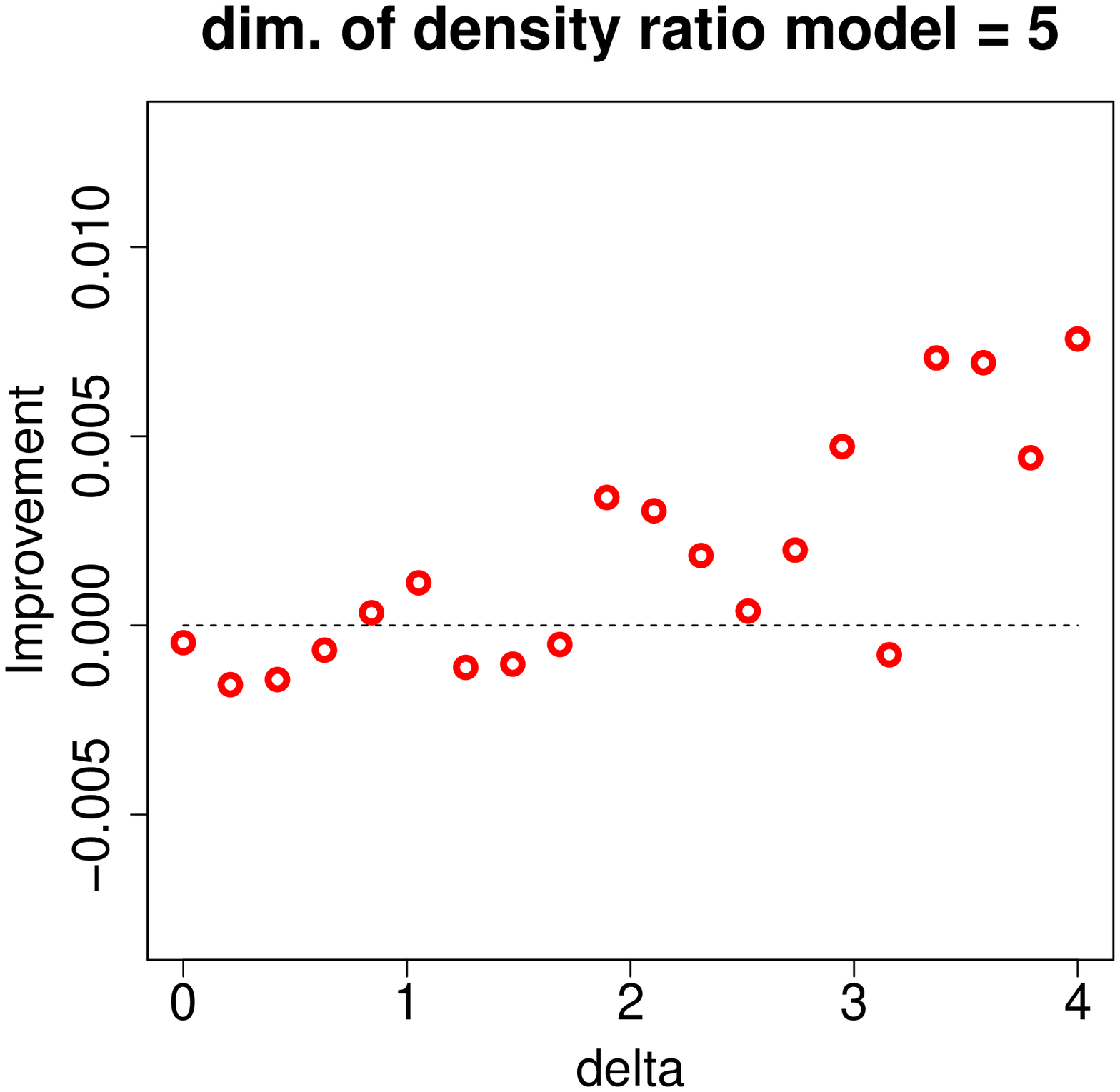}  &
   \includegraphics[scale=0.3]{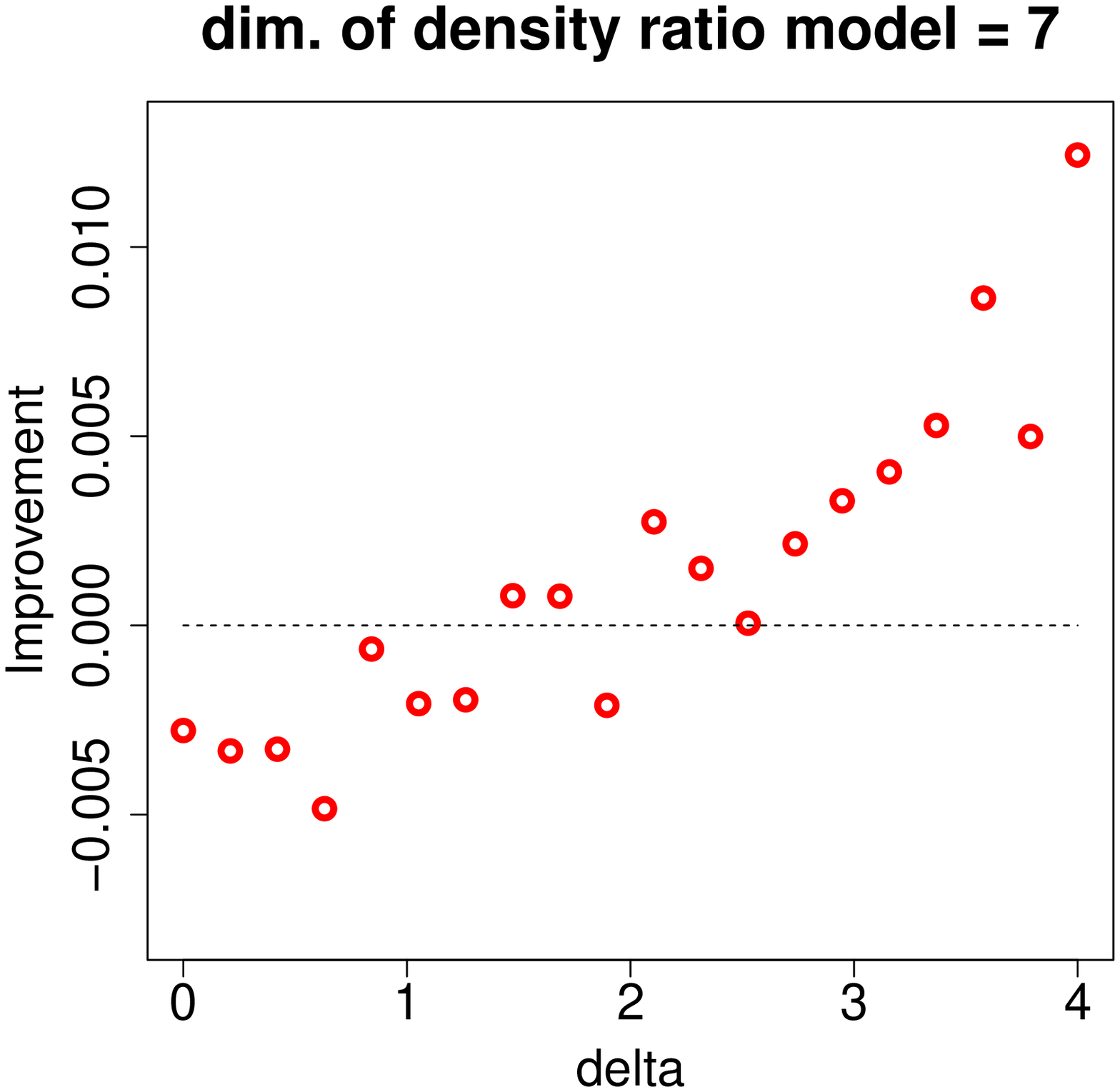}  \\
   \includegraphics[scale=0.3]{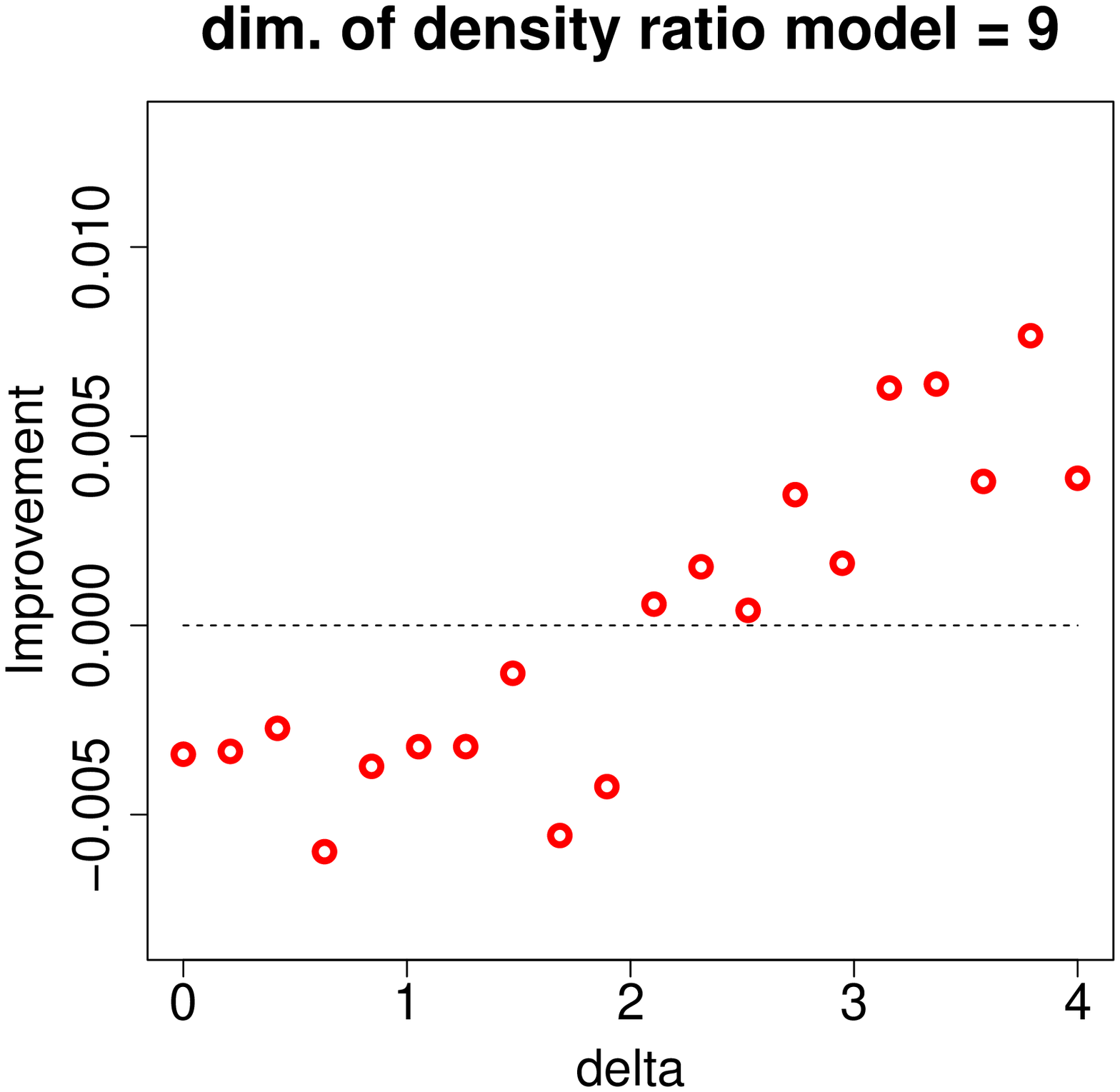}  &
   \includegraphics[scale=0.3]{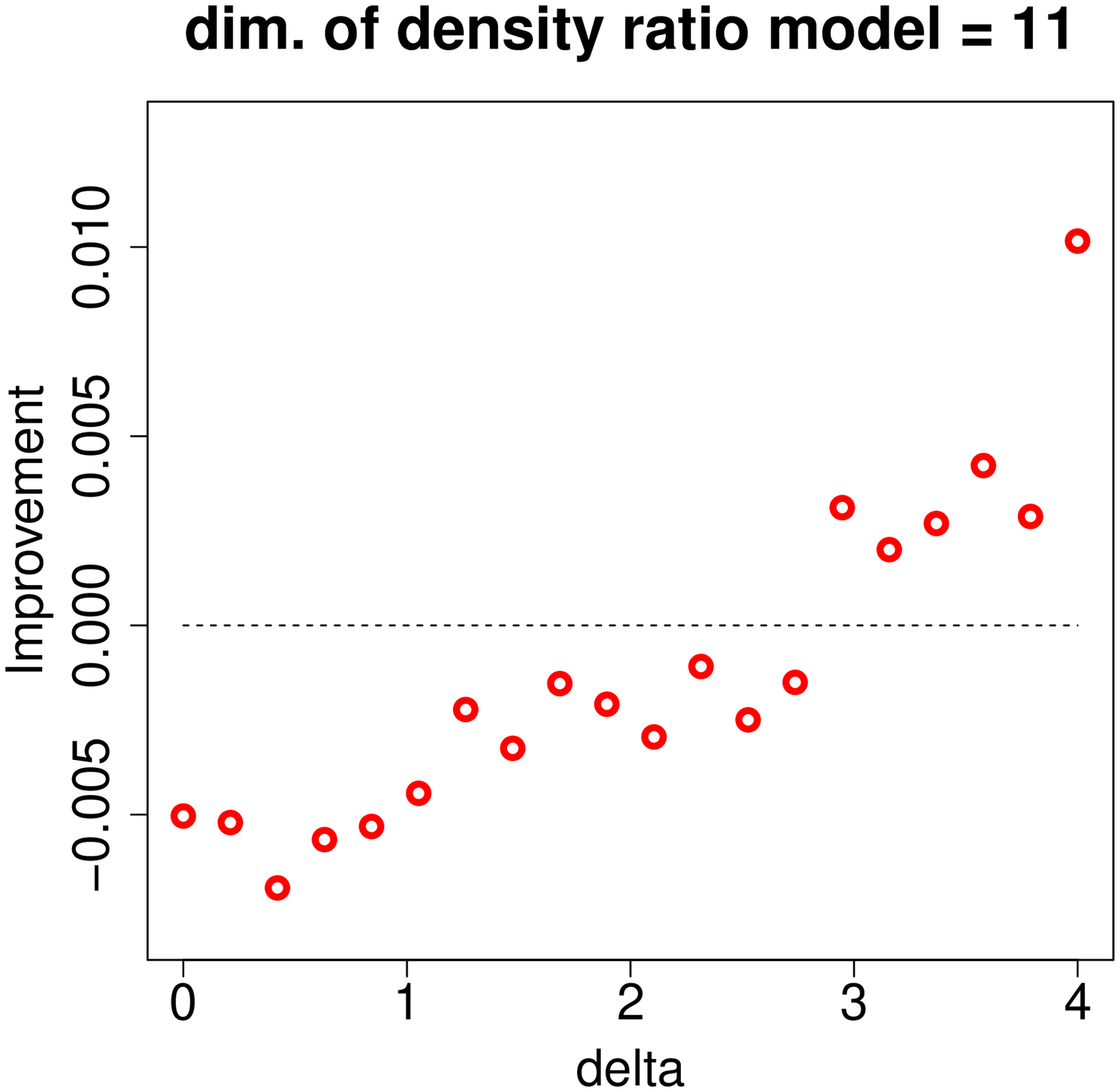} &
   \includegraphics[scale=0.3]{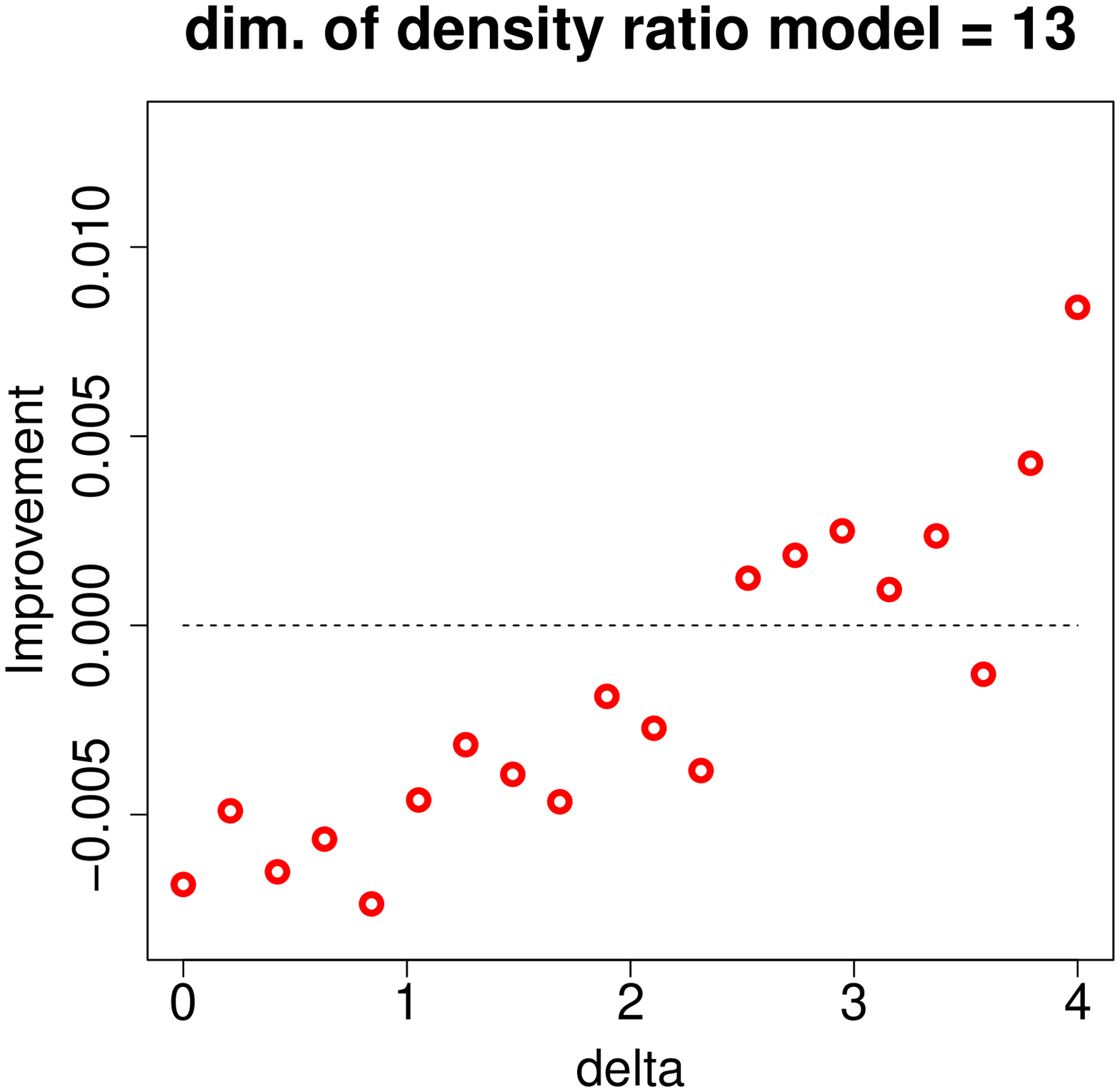} \\
  \end{tabular}
  \caption{The difference of the mean square errors is plotted as the function of $\delta$, 
  where $\delta$ is the normalized measure of the model misspecification.
  The vertical axes ``Improvement'' denotes the difference of the mean square errors 
  between LMS estimator and DRESS. Positive improvement denotes that DRESS
  outperforms LMS estimator. }
  \label{fig:sim_result_parametric_DRmodel}
 \end{center}
\end{figure}

Next, we compare LMS estimator and DRESS with a nonparametric estimator of the
density-ratio. 
Here, we use KuLSIF \cite{kanamori12:_statis} as the density-ratio estimator. 
KuLSIF is a non-parametric estimator of the density-ratio based on the kernel method. 
The regularization is efficiently conducted to suppress the degree of freedom of the
nonparametric model. In KuLSIF, the kernel function of the reproducing kernel Hilbert
space corresponds to the basis function $\phib(x)$. 


Under the setup of $d=10, n=50, n'=20,100,1000$ and $\sigma=0.1,0,2,0.5$, 
we compute the mean square errors by the average over 100 iterations. 
In Figure \ref{fig:numerical_experiments}, the square root of the mean square errors 
for LMS estimator and DRESS are plotted as the function of $\delta$, 
i.e., (model error)/(statistical error). 
When $\delta$ is around $1$, it is statistically hard to detect the model misspecification
by the training data of the size $n=50$.  
When the model is specified ($\varepsilon=0$), LMS estimator presents better 
performance than DRESS. 
Under the practical setup such as $\delta>1$, 
however, we see that DRESS with KuLSIF outperforms LMS estimator. 
As shown in the asymptotic analysis, we notice that the sample size of the unlabeled data
affects the estimation accuracy of DRESS. 
The numerical results show that DRESS with large $n'$ attains the smaller error 
comparing to DRESS with small $n'$, especially when $\delta>1$ holds. 
In the numerical experiment, even DRESS with $n=50$ and $n'=20$ slightly outperforms LMS
estimator. This is not supported by the asymptotic analysis.  
Hence, we need more involved theoretical study about the statistical feature of
semi-supervised learning.

\begin{figure}[tb]
 \begin{center}
  \hspace*{-6mm}
  \begin{tabular}{ccc}
 \includegraphics[scale=0.3]{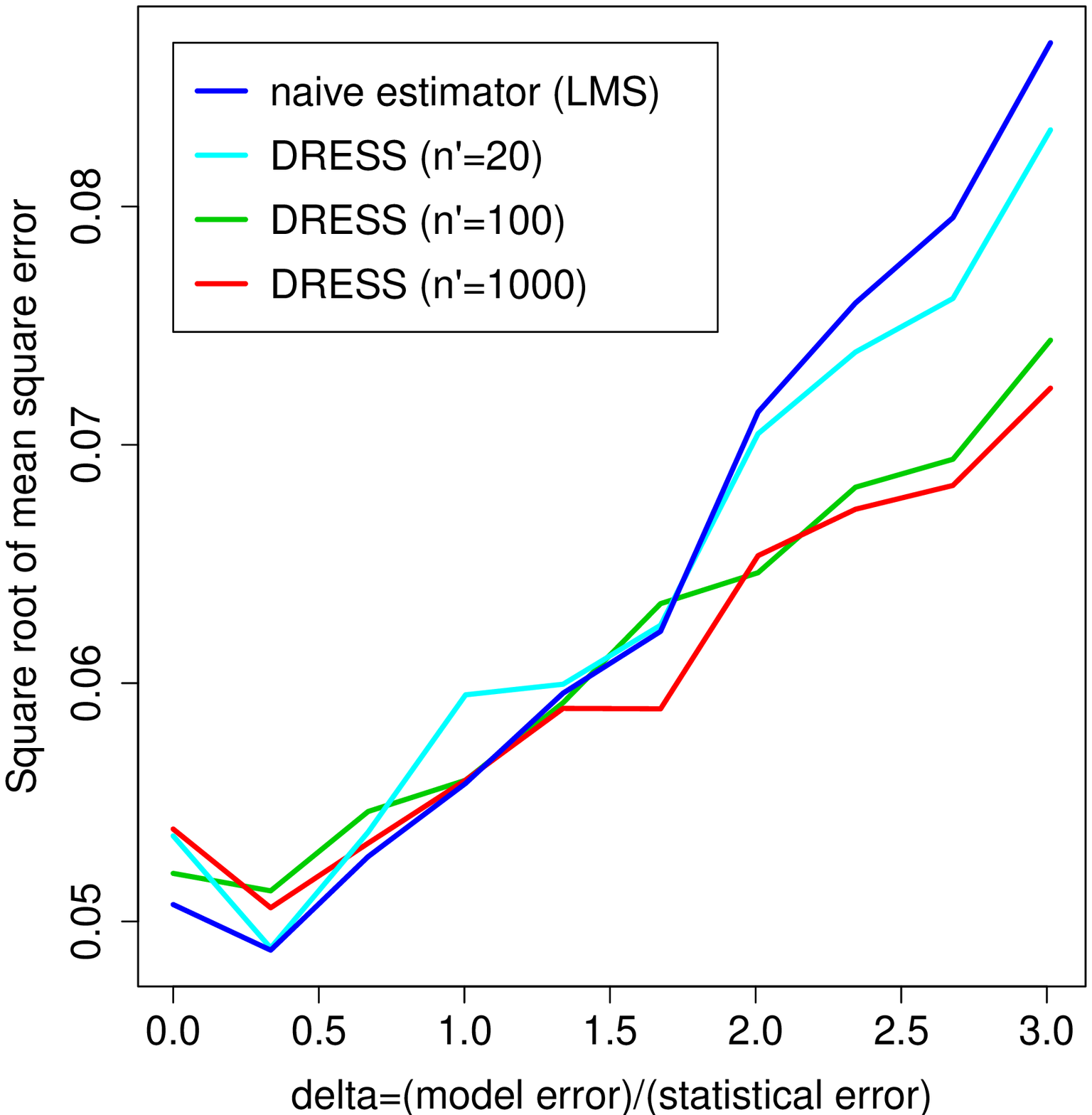} &
 \includegraphics[scale=0.3]{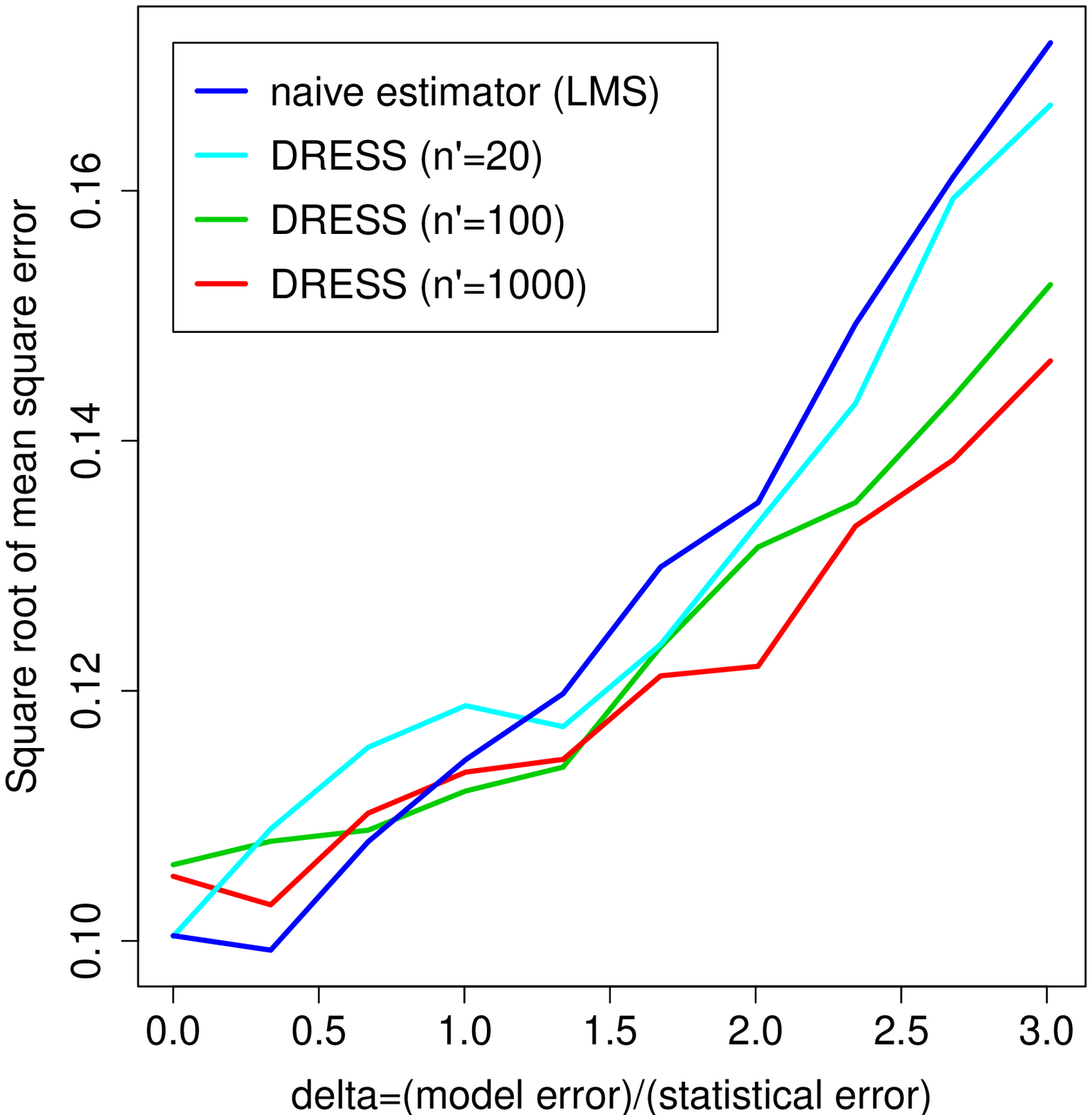} &
 \includegraphics[scale=0.3]{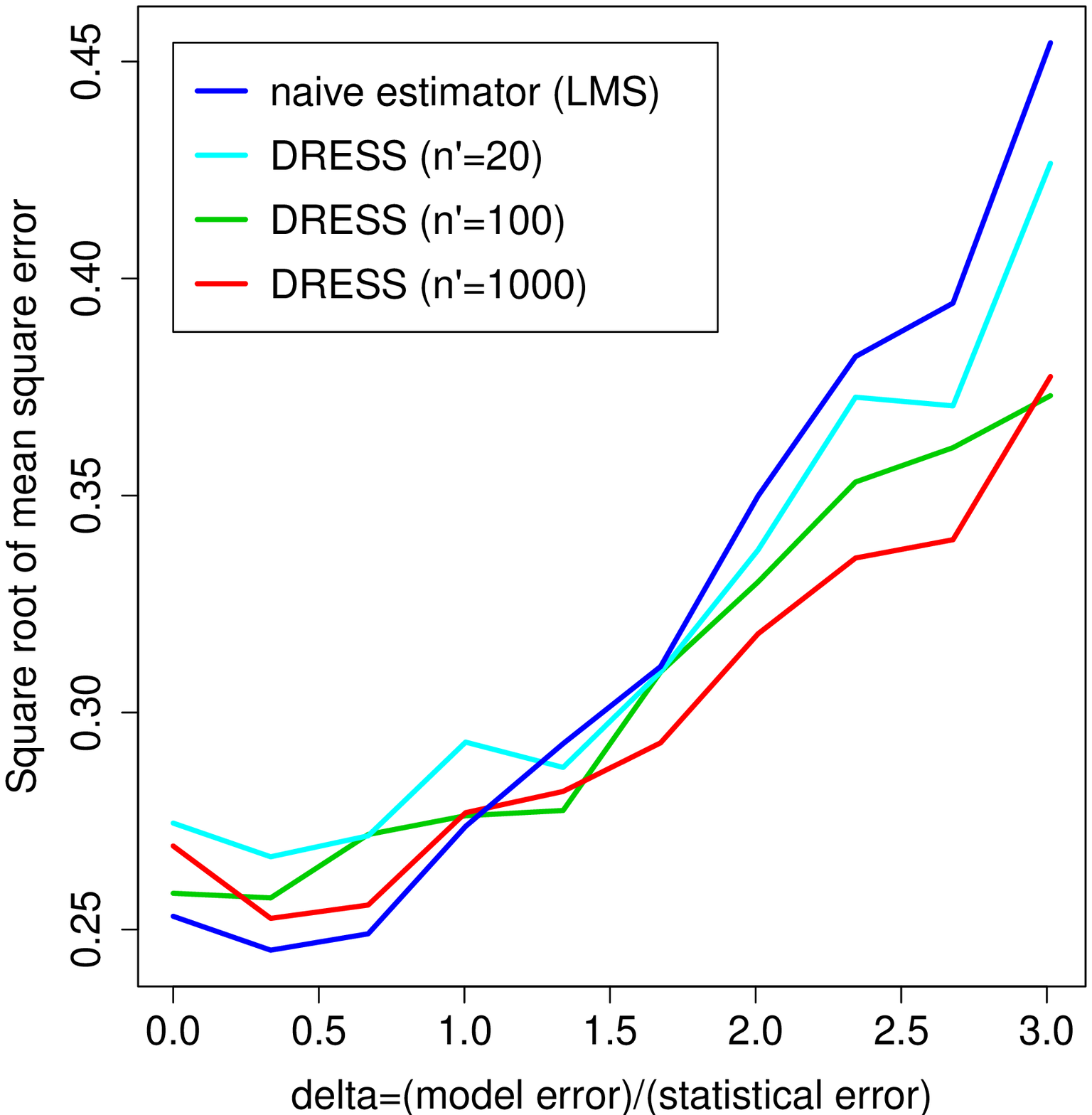} \\ 
   $\sigma=0.1$ & $\sigma=0.2$ & $\sigma=0.5$  
  \end{tabular}
  \caption{The square root of mean square errors of naive estimator and DRESS 
  with $n'=20,100,1000$ are depicted as the function of $\delta$, 
  where $\delta$ is the normalized measure of the model misspecification. 
  The sample size of the labeled data is $n=50$, and $\sigma$ is the standard deviation of 
  the noise involved in the dependent variable $y$. }
  \label{fig:numerical_experiments}
 \end{center}
\end{figure}


\subsection{Classification problems}
\label{subsec:Classification}
As a classification task, 
we use {\tt spam} dataset in ``kernlab'' of R package \cite{karatzoglou04:_kernl}. 
The dataset includes 4601 samples. The dimension of the covariate is 57, i.e.,
$\x=(x_1,\ldots,x_{57})^T$ whose elements represent statistical features of each
document. The output $y$ is assigned to ``spam'' or ``nonspam''. 

For the binary classification problem, we use the logistic model, 
\begin{align*}
 P(\text{spam}\,|\,\x;\alphab)=\frac{1}{1+\exp\{-\alpha_0-\sum_{d=1}^D{\alpha_d}x_d\}}, 
\end{align*}
where $D$ is the dimension of the covariate used in the logistic model. 
In numerical experiments, $D$ varies from 10 to 57, hence, 
the dimension of the model parameter $\alphab$ varies from 11 to 58. 
We tested DRESS with KuLSIF \cite{kanamori12:_statis}
and MLE with $n=200, 500, 800$ randomly chosen 
labeled training samples and $n'=100,500,1000,2000$ unlabeled 
training samples. The remaining samples are served as the test data. 
The score function $\u(x,y;\alphab)=\nabla\log{P}(y|\x;\alphab)$ is used for the
estimation. 

Table \ref{table:spam_error} shows the prediction errors $(\%)$  
with the standard deviation. 
We also show the p-value of the one-tailed paired $t$-test 
for prediction errors of DRESS and MLE. Small p-values denote the superiority of DRESS. 
We notice that p-value is small when the dimension $D$ is not large. 
In other word, the numerical results meet the asymptotic theory in Section
\ref{sec:Maximum_Improvement_SSL}. 
For relatively high dimensional models, 
the prediction error of MLE is smaller than that of DRESS; see the row of $D=57$ in Table
\ref{table:spam_error}. 
The size of unlabeled data, $n'$, also affects the results. 
Indeed, the p-value becomes small for large $n'$. 
This result is supported by the asymptotic analysis presented in Section
\ref{sec:Maximum_Improvement_SSL}. 

\begin{table}[htb]
 \caption{Prediction errors $(\%)$ for DRESS with KuLSIF and MLE are shown. 
 The p-values of the one-tailed paired $t$-test for prediction errors are also
 presented.}
 \label{table:spam_error}
 \begin{center}
  \hspace*{-10mm}
  \footnotesize
  \begin{tabular}{cccccccccccc}
   & \multicolumn{3}{c}{$n=200,\,n'=100$} && \multicolumn{3}{c}{$n=500,\,n'=100$} && \multicolumn{3}{c}{$n=800,\,n'=100$} \\
   \cline{2-4}  \cline{6-8}  \cline{10-12} 
   $D$&        DRESS &          MLE &p-value&&       DRESS &           MLE &p-value&&        DRESS &           MLE &p-value \\ \hline
   10&21.48$\pm$0.95&21.69$\pm$1.09&0.023&&20.86$\pm$0.76&20.93$\pm$0.82&0.163&&20.73$\pm$0.71&20.72$\pm$0.67&0.541\\	
   20&18.81$\pm$1.30&18.54$\pm$1.42&0.987&&17.15$\pm$0.79&17.16$\pm$0.90&0.424&&16.63$\pm$0.68&16.93$\pm$0.87&0.000\\	
   30&14.67$\pm$1.54&14.44$\pm$1.50&0.993&&11.83$\pm$0.75&11.92$\pm$0.83&0.056&&11.29$\pm$0.51&11.39$\pm$0.56&0.057\\	
   40&16.16$\pm$1.79&16.06$\pm$1.83&0.910&&12.18$\pm$0.81&12.19$\pm$0.84&0.410&&11.24$\pm$0.60&11.40$\pm$0.62&0.005\\	
   50&15.98$\pm$2.49&15.84$\pm$2.45&0.939&&11.41$\pm$0.94&11.25$\pm$0.94&0.988&&10.13$\pm$0.66&10.15$\pm$0.65&0.359\\	
   57&15.08$\pm$2.64&15.01$\pm$2.67&0.777&&10.83$\pm$1.06&10.59$\pm$0.90&1.000&& 9.07$\pm$0.61& 8.98$\pm$0.70&0.959\vspace*{2mm}\\
   & \multicolumn{3}{c}{$n=200,\,n'=500$} && \multicolumn{3}{c}{$n=500,\,n'=500$} && \multicolumn{3}{c}{$n=800,\,n'=500$} \\
   \cline{2-4}  \cline{6-8}  \cline{10-12} 
   $D$&        DRESS &          MLE &p-value&&       DRESS &           MLE &p-value&&        DRESS &           MLE &p-value \\ \hline
   10&21.34$\pm$0.88&21.59$\pm$1.12&0.003&&20.56$\pm$0.70&21.06$\pm$0.84&0.000&&20.40$\pm$0.62&20.76$\pm$0.71&0.000\\
   20&18.58$\pm$1.40&18.60$\pm$1.45&0.406&&16.76$\pm$0.79&17.10$\pm$0.96&0.000&&16.51$\pm$0.67&16.95$\pm$0.90&0.000\\
   30&14.46$\pm$1.50&14.48$\pm$1.39&0.392&&11.71$\pm$0.70&11.86$\pm$0.73&0.002&&11.21$\pm$0.56&11.46$\pm$0.58&0.000\\
   40&15.88$\pm$1.98&15.83$\pm$2.05&0.759&&11.96$\pm$0.79&12.04$\pm$0.77&0.035&&11.13$\pm$0.56&11.41$\pm$0.62&0.000\\
   50&16.18$\pm$2.31&16.22$\pm$2.30&0.303&&11.24$\pm$0.92&11.26$\pm$0.93&0.350&&10.04$\pm$0.66&10.13$\pm$0.70&0.021\\
   57&14.88$\pm$2.82&14.77$\pm$2.77&0.933&&10.83$\pm$1.07&10.61$\pm$0.98&1.000&& 8.79$\pm$0.70& 8.81$\pm$0.63&0.319\vspace*{2mm}\\
   & \multicolumn{3}{c}{$n=200,\,n'=1000$} && \multicolumn{3}{c}{$n=500,\,n'=1000$} && \multicolumn{3}{c}{$n=800,\,n'=1000$} \\
   \cline{2-4}  \cline{6-8}  \cline{10-12} 
   $D$&        DRESS &          MLE &p-value&&       DRESS &           MLE &p-value&&        DRESS &           MLE &p-value \\ \hline
   10&21.26$\pm$0.96&21.74$\pm$1.28&0.000&&20.57$\pm$0.74&21.02$\pm$0.80&0.000&&20.29$\pm$0.61&20.74$\pm$0.64&0.000\\
   20&18.37$\pm$1.27&18.63$\pm$1.45&0.001&&16.78$\pm$0.70&17.08$\pm$1.00&0.000&&16.47$\pm$0.65&16.93$\pm$0.80&0.000\\
   30&14.53$\pm$1.51&14.60$\pm$1.42&0.089&&11.73$\pm$0.67&12.04$\pm$0.78&0.000&&11.16$\pm$0.62&11.43$\pm$0.69&0.000\\
   40&16.05$\pm$1.97&16.06$\pm$1.92&0.463&&11.84$\pm$0.78&11.91$\pm$0.75&0.098&&11.19$\pm$0.63&11.45$\pm$0.72&0.000\\
   50&15.58$\pm$2.16&15.52$\pm$2.10&0.703&&11.20$\pm$0.86&11.20$\pm$0.86&0.566&& 9.94$\pm$0.75&10.06$\pm$0.80&0.006\\
   57&14.99$\pm$2.86&14.94$\pm$2.93&0.684&&10.83$\pm$1.04&10.75$\pm$0.98&0.935&& 8.88$\pm$0.72& 8.99$\pm$0.73&0.014\vspace*{2mm}\\
   & \multicolumn{3}{c}{$n=200,\,n'=2000$} && \multicolumn{3}{c}{$n=500,\,n'=2000$} && \multicolumn{3}{c}{$n=800,\,n'=2000$} \\
   \cline{2-4}  \cline{6-8}  \cline{10-12} 
   $D$&        DRESS &          MLE &p-value&&       DRESS &           MLE &p-value&&        DRESS &           MLE &p-value \\ \hline
   10&21.31$\pm$1.06&21.78$\pm$1.27&0.000&&20.49$\pm$0.81&21.00$\pm$0.94&0.000&&20.18$\pm$0.85&20.70$\pm$1.02&0.000\\
   20&18.36$\pm$1.35&18.62$\pm$1.51&0.008&&16.79$\pm$0.86&17.18$\pm$1.09&0.000&&16.37$\pm$0.80&16.88$\pm$0.97&0.000\\
   30&14.66$\pm$1.71&14.53$\pm$1.69&0.956&&11.65$\pm$0.77&11.82$\pm$0.79&0.001&&11.12$\pm$0.79&11.44$\pm$0.85&0.000\\
   40&15.78$\pm$1.76&15.60$\pm$1.74&0.985&&11.81$\pm$0.90&12.10$\pm$0.97&0.000&&10.94$\pm$0.81&11.33$\pm$0.79&0.000\\
   50&16.21$\pm$2.17&16.01$\pm$2.14&0.973&&11.24$\pm$1.02&11.29$\pm$0.98&0.183&&10.01$\pm$0.78&10.19$\pm$0.78&0.001\\
   57&14.87$\pm$2.57&14.95$\pm$2.55&0.170&&10.52$\pm$1.13&10.56$\pm$1.14&0.187&& 8.71$\pm$0.79& 8.91$\pm$0.84&0.000\vspace*{2mm}\\
  \end{tabular}
 \end{center}
\end{table}


\section{Conclusion}
\label{sec:Conclusion}
In this paper, we investigated the semi-supervised learning with density-ratio estimator. 
We proved that the unlabeled data is useful when the model of the conditional probability
$p(y|x)$ is misspecified. This result agrees to the result given by 
Sokolovska, et al.~\cite{sokolovska08}, 
in which the weight function is estimated by using the estimator 
of the marginal probability $p(x)$ under a specified model of $p(x)$. 
The estimator proposed in this paper is useful in practice, 
since our method does not require the well-specified model for the marginal probability. 
Numerical experiments present the effectiveness of our method. 
We are currently investigating semi-supervised learning from the perspective of
semiparametric inference with missing data. 
A positive use of the statistical paradox in semiparametric inference is 
an interesting future work for semi-supervised learning.

\section*{Acknowledgement}
The authors are grateful to Dr.~Masayuki Henmi, Dr.~Hironori Fujisawa and Prof.~Shinto
Eguchi of Institute of Statistical Mathematics. 
TK was partially supported by Grant-in-Aid for Young Scientists (20700251).

\bibliographystyle{mlapa}




\end{document}